\numberwithin{equation}{section}
\newtheorem{theorem}{Theorem}[section]
\newtheorem{lemma}{Lemma}[section]
\newtheorem{algorithm}{Algorithm}[section]
\newcommand{\M}{{\mathbb{M}}}
\begin{document}

\title{On Regularization of Convolutional Kernels in Neural Networks
%\thanks{}
}

\author{
Pei-Chang Guo \thanks{Research supported in part by  China Scholarship Council. This work was done while this author was a visiting scholar at the Department of Mathematics, University of Kentucky, from July 2018 to July 2019.   E-mail:peichang@cugb.edu.cn} \\
School of Science,
China University of Geosciences, Beijing, 100083, China\\
Qiang Ye \thanks{Research supported in part
by NSF under grants DMS-1821144 and DMS-1620082. Email:qye3@uky.edu}\\
Department of Mathematics, University of Kentucky,\\
Lexington, KY, 40508, United States
}

\date{}

\maketitle
\begin{abstract}
Convolutional neural network is an important model in deep learning. To avoid  exploding/vanishing gradient problems and to improve the generalizability of a neural network, it is desirable to have a convolution operation that nearly preserves the norm, or to have the singular values of the transformation matrix corresponding to a convolutional kernel bounded around $1$.  We propose a penalty function that can be used in the optimization of a convolutional neural network to constrain the singular values of  the  transformation matrix around  $1$.  We derive an algorithm to carry out the gradient descent minimization of this penalty function in terms of convolution kernels. Numerical examples are presented to demonstrate the effectiveness of the method.
%The penalty is about the transformation matrix corresponding to a kernel, not directly about the kernel, which is different from results in existing papers. This provides a new regularization method about  the weights of convolutional layers.
%Other penalty functions about a kernel can be devised following this idea in future.

\vspace{2mm} \noindent \textbf{Keywords}:  penalty function, transformation matrix, convolutional layers, generalizability, unstable gradient.
\end{abstract}

\section{Introduction}

%Convolution without flip is an important operation in the field of deep learning  \cite{dumoulin2018}. Depending on different strides and padding patterns, there are many different forms of convolution  \cite{dumoulin2018}. Without loss of  generality,  we will consider in this paper the convolution with unit strides.
%Our objective is to find a convolution kernel $K$ so that its corresponding  transformation matrix $M$ has some desirable properties, such as having singular values bounded around $1$ or being well-conditioned.

% First we introduce the convolution arithmetic in deep learning,  which is different from

The classical convolution operation is an essential tool in signal processing. More general forms of convolution that use no flip in multiplications but may have different strides and padding patterns have been introduced and widely used in deep learning  \cite{dumoulin2018}. Here, only element-wise multiplication and addition are performed and there is no reverse multiplication with the convolutional kernel. Without loss of  generality,  we will consider in this paper 2-dimensional convolutions with unit strides and with zero padding.
% We use $*$ to denote the convolution arithmetic in deep learning and $\ulcorner \cdot\urcorner$ is to round a number to the nearest integer greater than or equal to that number.
Specifically, given a convolutional kernel matrix $K=[k_{ij}]\in \mathbb{R}^{k\times k}$  and an input matrix  $X=[x_{ij}]\in \mathbb{R}^{N\times N}$, we consider the convolution of $K$ and $X$, denoted by $Y = K*X \in \mathbb{R}^{N \times N}$,  as defined by
 %each entry of the output $Y\in \mathbb{R}^{N \times N}$  is produced by
\begin{equation}\label{eq:1channel}
    y_{r,s}= (K*X)_{r,s}= \sum_{p\in \{1,\cdots,k\} }\sum_{q\in \{1,\cdots,k\}} x_{r-m+p,s-m+q}k_{p,q},
\end{equation}
where $m=\ulcorner k/2\urcorner$,  and $x_{i,j}=0$ if $i\leq 0$ or $i> N$, or $j\leq 0$ or $j> N$. Here and throughout, $\ulcorner x\urcorner$ denotes the smallest integer greater than or equal to $x$. %and $M_{r,s}$ denote the $(r,s)$ entry of a matrix $M$.

Indeed, in convolutional neural networks (CNNs), a more general form of convolution is typically used where the input is a multichannel signal  represented by a 3-dimensional tensor $X=[x_{ijk}]\in \mathbb{R}^{N\times N\times g}$. Namely, the input $X$ has $g$ channels of $N\times N$ matrices. Then, a convolutional kernel is  represented by a 4-dimensional tensor $K=[k_{ijk\ell}]\in \mathbb{R}^{k\times k\times g  \times h}$ and the multichannel convolution of $K$ and $X$ produces  a 3 dimensional tensor output  $Y=[y_{ijk}]\in \mathbb{R}^{N \times N \times h}$,
%having $h$ channels. Thus, given a 4 dimensional tensor $K=[k_{ijk\ell}]\in \mathbb{R}^{k\times k\times g  \times h}$ as  a convolutional kernel, the multichannel convolution
as denoted  by $Y = K*X $  and defined by
\begin{equation}\label{eq:Mchannel}
    y_{r,s,c}= (K*X)_{r,s,c}=\sum_{d\in \{1,\cdots,g\}} \sum_{p\in \{1,\cdots,k\} }\sum_{q\in \{1,\cdots,k\}} x_{r-m+p,s-m+q,d}k_{p,q,d,c},
\end{equation}
where $m=\ulcorner k/2\urcorner$ and $x_{i,j,d}=0$ if $i\leq 0$ or $i> N$, or $j\leq 0$ or $j> N$. We will also call (\ref{eq:1channel}) a one-channel convolution, which is a special case of the multichannel convolution (\ref{eq:Mchannel}).

Clearly, the convolution operation is a linear transformation on $X$ and each convolutional kernel  corresponds to a linear transformation matrix. Indeed, the convolution equation $Y=K*X$ can be written as a matrix-vector product after reshaping $X$ and $Y$.  Let $vec(X)$   denote the reshape of $X$ into a vector as follows. If $X$ is a matrix,  $vec(X)$ is the column vector obtained  by  stacking the columns of  $X$ on top of one another. If $X$ is a tensor,  $vec(X)$ is the column vector obtained  by  stacking the columns of the flattening of $X$ along the first index (see \cite{golub2012} or Section \ref{sec:multi} for more details). Then, given a kernel $K$, there is a corresponding matrix $M$
%be the linear transformation matrix corresponding to the kernel $K$
such that
 \begin{equation*}
    vec(Y)=M \,vec(X).
\end{equation*}

In deep convolutional networks, a convolution kernel $K$ is a model parameter that defines the input and output $Y=K*X$ in a convolution layer and we need to construct $K$ by optimizing certain loss function ${\cal L} (K)$ with respect to $K$ (see Subsection \ref{app}
below). It is then desirable to construct $K$ such   that
\begin{equation}\label{eq:normxy}
\|vec(Y)\|_2\approx \|vec(X)\|_2
\end{equation}
%$\|vec(Y)\|_2\approx \|vec(X)\|_2$. % where $\|\cdot\|$ denotes a certain vector norm,
Such properties ensure bounded gradients which is critical in  training deep convolutional  neural networks \cite{hochreiter2001}.
Ideally then, we would like to construct $K$ so that the corresponding $M$ has orthonormal columns (i.e. $M^TM=I$), but this is  in general impossible because $M^TM=I$ involves $gN^2(gN^2+1)/2$ equations while $K$ has only $k^2 gh$ parameters with $k \ll N$ usually in neural networks. One known situation where an orthogonal $M$ can be constructed is the one-channel periodic convolution with full sized $N\times N$ kernel (i.e. k=N), for which the convolution becomes a diagonal matrix multiplication after discrete Fourier transforms; see \cite{sedghi2018} for example.
%Given a general kernel $K$ whose size is $k\times k\times g  \times h$ and the input whose size is $ N\times N\times g$, it is not known whether there exists an orthogonal $hN^2\times gN^2$ transformation matrix  corresponding to the $k\times k\times g  \times h$ kernel. It's theoretically difficult to answer this question.
For a multichannel convolution with a small kernel ($k\ll N$), a more realistic goal is to construct a kernel so that the singular values of the corresponding transformation matrix $M$ are bounded around $1$.

In this paper, we focus  on the numerical problem of constructing a convolutional kernel $K$ using a suitable regularization function to move the singular values of $M$ towards $1$.
One may consider explicitly  adding $max\{|\sigma_{max}(M)-1|,|\sigma_{min}(M)-1|\}$
%\begin{equation}\label{prob1}
%  max\{|\sigma_{max}(M)-1|,|\sigma_{min}(M)-1|\}
%\end{equation}
to ${\cal L} (K)$ as a penalty function,
where $\sigma_{max}(\cdot)$ and $\sigma_{min}(\cdot)$ denote the largest and respectively the smallest singular values of a matrix,  but with two objectives, this is more difficult to implement  however.
We   propose using ${\cal R}_\alpha (K) := \sigma_{max} (M^TM-\alpha I)$ (for some $\alpha>0$) as a penalty function for the regularization of the convolutional kernel.
We will show  that reducing  ${\cal R}_\alpha (K)$ keeps the largest singular value bounded from above and the smallest singular value from below. Equivalently up to a scaling, this reduces the condition number of $M$. We will then derive a gradient descent algorithm for minimizing ${\cal R}_\alpha (K)$. Numerical examples will be presented to illustrate our method.

%The goal is to minimize the following regularization term
%\begin{equation}\label{prob1}
%  max\{|\sigma_{max}(M)-1|,|\sigma_{min}(M)-1|\}
%\end{equation}
%where $M$ is the linear transformation matrix corresponding to kernel $K$.
%But the term \eqref{prob1} is hard to minimize directly. For a general matrix $A$, people  let the singular values of $A$ be bounded around $1$ through penalizing the term  $\sigma_{max} (A^TA-I)$, where $\sigma_{max}(\cdot)$ denote the largest singular value of a matrix, and $I$ is the identity matrix \cite{kova2008}. In this paper, we will use $\sigma_{max} (M^TM-\alpha I)$ as the penalty function, which include $\sigma_{max} (M^TM- I)$ as an example,  to let   the singular values of $M$ be bounded around $1$.

%???????Do we need this paragraph??????As we know, given a matrix, the singular values/eigenvalues are continuous functions depending on the entries of the matrix. We can calculate the partial derivatives of a singular value with respect to the entries, and  let  the singular values of a matrix  be bounded through changing the entries. But the transformation matrix $M$ corresponding to a convolutional kernel is structured, i.e., $M$ has special matrix structure. When changing the entries of $M$, we should preserve the special structure of $M$ such that the updated $M$ can still correspond to the same size convolutional kernel.  In this paper we will show how to preserve the special structure of $M$ when we minimize $\sigma_{max} (M^TM-\alpha I)$. The modification on $M$ is actually carried out on a special matrix manifold.

There have been many works devoted to enforcing  the orthogonality or spectral norm regularization on the weights of a neural network; see  \cite{brock2017,cisse,miyato2018,yoshida} and the references contained therein. %The difference between our paper and papers including \cite{brock2017,cisse,miyato2018,yoshida} and the references therein.
For a convolutional layer, some of these works enforce the constraint directly on the $h\times (gkk)$  matrix reshaped from the kernel $K\in \mathbb{R}^{k\times k\times g  \times h}$ without any clear impacts on $M$ \cite{brock2017,cisse}. \cite{miyato2018,yoshida} normalize a matrix reshaped from $K$ by its spectral norm.   \cite{sedghi2018} first constructs a full-sized kernel  under the periodic convolution that has a corresponding $M$ with bounded singular values and then   projects the full-sized convolutional kernel to a desirable small  one. This projection obviously may not preserve the desirable singular value bound of the original kernel.
%layer onto the set of layers obeying a bound on the operator norm of the layer and use numerical results to show this is an effective regularizer. A drawback of this method  is that projection can prevent the singular values of the transformation matrix being large but cannot avoid small singular values.
Compared with those approaches, our method works on the convolution kernel $K$ but regularize on the singular values of $M$. We also note that there are many works on constructing orthogonal weight matrices in the context of recurrent neural networks; see \cite{Arjo16,Helfrich2017,Madu19,Wisdom16} and the references contained therein, but we are concerned here with optimizing the singular values of a linear transformation defined by a convolution kernel rather than a general weight matrix.

The rest of the paper is organized as follows. In subsection \ref{app}, we will discuss the origin of our problem in deep learning.
%%As we have mentioned, the input channels and the output channels maybe more than one so the kernel is usually represented by a tensor $K\in \mathbb{R}^{k\times k\times g  \times h}$.
In Section~\ref{sec:one}, we first propose the penalty function and discuss its theoretical property. We then derive the gradient formula and propose the gradient descent algorithm for the one-channel case in Subsection~\ref{sec:one} and for the multichannel case in Subsection \ref{sec:multi}.
%  we first consider the case that the numbers of input channels and the output channels are both $1$.  In Section \ref{sec:multi}, we propose the penalty function and calculate the partial derivatives for the case of multichannel convolution.
In Section~\ref{sec:numer},  we present numerical results to show the effectiveness of the method. We end in Section~\ref{sec:conclu} with some concluding remarks.
% we will give some conclusions and point out some interesting work that could be done in future.

\subsection{Applications in deep learning}\label{app}

The regularization problem we consider arises in training of deep convolutional neural networks.
Convolutional neural network is one of the most widely used model of deep learning. A typical convolutional neural network consists of convolutional layers, pooling layers, and fully connected layers.
%In recent years, deep convolutional neural networks have been applied successfully in many fields, such as face recognition, self-driving cars, natural language understanding and speech recognition.
Training the neural networks is an optimization problem, which  seeks optimal weights (parameters) by reaching the minimum of loss function on the training data.
This can be described as follows: given a labeled data set $\{(X_i,Y_i)\}_{i=1}^{N}$, where $X_i$ is the input and $Y_i$ is the output, and a given parametric family of functions $\mathbb{F}=\{f(\Theta,X)\}$, where $\Theta$ denotes the parameters contained in the function, the goal of training the neural networks is to find the best parameters $\Theta$ such that $Y_i\approx f(\Theta,X_i)$ for $i=1,\cdots,N$. The practice is to minimize the so called loss function, e.g $\Sigma_{i=1}^{N}\|Y_i-f(\Theta,X_i)\|_2^2$ on the training data set.

For example, a typical convolutional neural network has $l$ convolutional layers parameterized by $l$ convolution kernels $K_p$ ($1\le p \le l$) and $m$ so-called fully-connected layers defined by weight matrices $W_q, (1\leq q\leq m)$; we omit the bias for the ease of notation. Then the output of the network can be written as $Y=f(K_1,K_2,\cdots, K_l,W_1,W_2,\cdots,W_m, X)$ and we train the network  by solving the following  optimization problem for the training dataset $\{(X_i,Y_i)\}_{i=1}^{N}$:
\begin{equation}\label{opt}
min_{K_1,K_2,\cdots, K_l,W_1,W_2,\cdots,W_m}  \frac{1}{N}\Sigma_{i=1}^{N}\|Y_i-f(K_1,K_2,\cdots, K_l,W_1,W_2,\cdots,W_m, X_i)\|.
\end{equation}
%\begin{equation*}
%\mbox{s.t.} \quad M_p^TM_p\approx I, \quad 1\leq p \leq l, \quad \mbox{and} \quad  W_q^TW_q\approx I,  \quad 1\leq q \leq m,
%\end{equation*}
%where  $I$ denotes the identity matrix.

Exploding and vanishing gradients are fundamental obstacles to solving (\ref{opt}) or training of deep neural networks \cite{hochreiter2001}. The singular values of the Jacobian of a layer bound the factor by which it changes the norm of the backpropagated signal. If these singular values are all close to $1$, then gradients neither explode nor vanish. This can also help improve the generalizability.
%to let  the singular values of the transformation matrix corresponding to a kernel are bounded around $1$.
Specifically, although the training of neural networks can be seen as an optimization problem, but the goal of training is not merely to minimize the loss function on training data set. In fact, the performance of the trained model on new data is the ultimate concern. That is to say, after we find the weights or parameters $\Theta$ through minimizing the loss function on training data set, we will use the weights $\Theta$ to get a neural network to predict the output or label for   new input data.
%Sometimes, the minimum on the training model is reached while the performance on test data is not satisfactory. A concept,  generalizability, is used to describe this phenomenon.
Generalizability, the ability of a network to extend its performance on the training data to new data, can  be improved through reducing the sensitivity of the output against the input data perturbation \cite{goodfellow2013,szegedy2014,tsuzuku2018,zhang,yoshida}. This again can be achieved through (\ref{eq:normxy}) and hence through regularizing the singular values of $M$.

\section{Regularization of Convolution Kernels}\label{sec:None}

One way to achieve \eqref{eq:normxy} is by minimizing $\sigma_{max} (M^TM-I)$ so that $M$ is close to being orthogonal. Since the  number of parameters in the convolution kernel $K$ may be relatively small, the minimum value with respect to  $K$ may not be  very close to $0$. Namely, enforcing $M$ to be nearly orthogonal may be too strong a condition to satisfy. Note that our goal to decrease $\sigma_{max}(M)$ while increasing $\sigma_{min}(M)$ is equivalent, up to a scaling, to decreasing the condition number of $M$. In light of this, we propose to minimize ${\cal R}_\alpha (K) := \sigma_{max} (M^TM- \alpha I)$ for some fixed $\alpha$. The following theorem justifies this approach.
%When this happens, how far are the singular values of $M$ from $1$? About this first remark, we give the following Lemma.

\begin{theorem}\label{remark1}
Let  $\alpha>0$ and $M \in \mathbb{R}^{m\times n}$ be such that ${\cal R}_\alpha (K) =\sigma_{max}(M^TM-\alpha I) < t \alpha$ for some $0< t \le 1$.  Then the largest and the smallest singular value of $M$, denoted by $\sigma_{max}(M)$ and $\sigma_{min}(M)$ respectively, satisfy that $$\sqrt{(1-t)\alpha} < \sigma_{min}(M)\leq \sigma_{max}(M) < \sqrt{(1+t)\alpha}.$$
In particular, $\kappa_2 (M) := \frac{\sigma_{max}(M)}{\sigma_{min}(M)} < \sqrt{\frac{1+t}{1-t}}.$
\end{theorem}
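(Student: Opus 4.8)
The plan is to reduce everything to the spectrum of the symmetric matrix $M^TM-\alpha I$. First I would observe that $M^TM$ is symmetric positive semidefinite and that its eigenvalues are precisely the squares $\sigma_i^2$ of the singular values $\sigma_i$ of $M$. Hence $M^TM-\alpha I$ is symmetric with eigenvalues $\sigma_i^2-\alpha$, and because the spectral norm of a symmetric matrix equals the largest absolute value of its eigenvalues, the hypothesis can be rewritten as
$$\sigma_{max}(M^TM-\alpha I)=\max_i|\sigma_i^2-\alpha|<t\alpha.$$

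Next I would unpack this uniform bound. For every singular value $\sigma_i$ it yields $|\sigma_i^2-\alpha|<t\alpha$, that is $(1-t)\alpha<\sigma_i^2<(1+t)\alpha$. Since $\alpha>0$ and $0<t\le 1$ guarantee $(1-t)\alpha\ge 0$, all three quantities are nonnegative and I may take square roots to obtain $\sqrt{(1-t)\alpha}<\sigma_i<\sqrt{(1+t)\alpha}$ for every $i$. Applying this to the index realizing $\sigma_{max}(M)$ and to the one realizing $\sigma_{min}(M)$ gives the two-sided inequality in the statement.

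The condition-number estimate then follows immediately by dividing the upper bound on $\sigma_{max}(M)$ by the lower bound on $\sigma_{min}(M)$; the common factor $\alpha$ cancels and leaves $\kappa_2(M)<\sqrt{(1+t)/(1-t)}$. I do not anticipate any genuine difficulty here: the whole argument rests on the identity $\sigma_{max}(A)=\max_i|\lambda_i(A)|$ for symmetric $A$ combined with the eigenvalue--singular-value correspondence for $M^TM$. The only points deserving a moment's care are verifying that $(1-t)\alpha\ge 0$ so the square root is legitimate, and noticing that the strict lower bound $\sigma_{min}(M)>\sqrt{(1-t)\alpha}\ge 0$ silently forces $M^TM$ to be nonsingular, so that all eigenvalues of $M^TM$ genuinely arise as squared singular values and no spurious eigenvalue equal to $-\alpha$ (which would occur when $m<n$ and would violate the hypothesis) can appear.
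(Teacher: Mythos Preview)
Your proposal is correct and follows essentially the same route as the paper: both arguments use that $M^TM-\alpha I$ is symmetric so its spectral norm equals the maximum absolute eigenvalue, translate the hypothesis into the two-sided bound $(1-t)\alpha<\lambda_i(M^TM)<(1+t)\alpha$, and take square roots. Your added remarks about the legitimacy of the square root and the implicit nonsingularity of $M^TM$ are sound refinements but do not change the approach.
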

\begin{proof}
We use $\lambda_1(\cdot), \lambda_2(\cdot), \cdots, \lambda_m(\cdot)$ to denote all eigenvalues of an $m\times m$ matrix.
Since $M^TM-\alpha I$ is symmetric and $\sigma_{max}(M^TM-\alpha I)<t \alpha$, then for all $i =1,2, \cdots, m$,  we have
\begin{equation*}
    - t\alpha<\lambda_i(M^TM-\alpha I)<t\alpha,
\end{equation*}
and thus
\begin{equation*}
   (1-t)\alpha<\lambda_i(M^TM)< (1+t)\alpha.
\end{equation*}
Therefore we have
\begin{equation*}
    \sqrt{(1-t)\alpha}<\sigma_{min}(M)\leq \sigma_{max}(M)<\sqrt{(1+t)\alpha}.
\end{equation*}
The bound on the condition number $\kappa_2 (M)$ follows immediately.
\end{proof}

Theorem~\ref{remark1} suggests that reducing ${\cal R}_\alpha (K)$ to a value less than $\alpha$ is sufficient to keep  $\sigma_{max}(M)$ bounded above and $\sigma_{min}(M)$ bounded below. Then, the  reduction in ${\cal R}_\alpha (K)$ needed to maintain boundedness of the singular values may be less by using a larger value of $\alpha$.
We next discuss the gradient descent algorithm to minimize ${\cal R}_\alpha (K)$. We first discuss the one-channel convolution and then present the generalization to multichannel cases.
%tells us that  when carrying out the gradient descent algorithm for  the penalty function $\sigma_{max} (M^TM-\alpha I)$, even if we can't have  $\sigma_{max}(M^TM-\alpha I)\approx 0$,  we can still ensure that the singular values of $M$ are in an bounded interval as long as $\sigma_{max} (M^TM-\alpha I)$ is bounded.

\subsection{One-channel convolution}\label{sec:one}

We first consider the one-channel convolution (\ref{eq:1channel}), i.e. in the context of (\ref{eq:Mchannel}), the numbers of input channels and the output channels are both $1$. In this case, the  kernel is a $k\times k$ matrix and the input and the output are $N\times N$ matrices. For the ease of notation, we use a $3\times 3$ convolution kernel to illustrate the associated transformation matrix. Let the convolution kernel $K$ be
\begin{eqnarray*}
K=\left(\begin{array}{ccc}
k_{11} & k_{12} & k_{13} \\
k_{21} & k_{22} & k_{23} \\
k_{31} & k_{32} & k_{33}
\end{array}\right).
\end{eqnarray*}
Then the transformation matrix corresponding to the convolution operation is
\begin{eqnarray}\label{conv0}
M=M(K):=\left(
  \begin{array}{cccccc}
    A_0 & A_{-1} & 0 & 0 & \cdots & 0 \\
    A_1 & A_0 & A_{-1} & \ddots & \ddots & \vdots \\
    0 & A_1 & A_0 & \ddots & \ddots & 0 \\
    0 & \ddots & \ddots & \ddots & A_{-1} & 0 \\
    \vdots & \ddots & \ddots & A_1 & A_0 & A_{-1} \\
    0 & \cdots & 0 & 0 & A_1 & A_0 \\
  \end{array}
\right)
\end{eqnarray}
where
\begin{eqnarray*}
A_0=\left(
  \begin{array}{cccccc}
    k_{22} & k_{23} & 0 & 0 & \cdots & 0 \\
    k_{21} & k_{22} & k_{23} & \ddots & \ddots & \vdots \\
    0 & k_{21} & k_{22} & \ddots & \ddots & 0 \\
    0 & \ddots & \ddots & \ddots & k_{23} & 0 \\
    \vdots & \ddots & \ddots & k_{21} & k_{22} & k_{23} \\
    0 & \cdots & 0 & 0 & k_{21} & k_{22}
  \end{array}
\right),\quad
A_{-1}=\left(
  \begin{array}{cccccc}
    k_{32} & k_{33} & 0 & 0 & \cdots & 0 \\
    k_{31} & k_{32} & k_{33} & \ddots & \ddots & \vdots \\
    0 & k_{31} & k_{32} & \ddots & \ddots & 0 \\
    0 & \ddots & \ddots & \ddots & k_{33} & 0 \\
    \vdots & \ddots & \ddots & k_{31} & k_{32} & k_{33} \\
    0 & \cdots & 0 & 0 & k_{31} & k_{32}
  \end{array}
\right),
\end{eqnarray*}
\begin{eqnarray*}
A_{1}=\left(
  \begin{array}{cccccc}
    k_{12} & k_{13} & 0 & 0 & \cdots & 0 \\
    k_{11} & k_{12} & k_{13} & \ddots & \ddots & \vdots \\
    0 & k_{11} & k_{12} & \ddots & \ddots & 0 \\
    0 & \ddots & \ddots & \ddots & k_{13} & 0 \\
    \vdots & \ddots & \ddots & k_{11} & k_{12} & k_{13} \\
    0 & \cdots & 0 & 0 & k_{11} & k_{12}
  \end{array}
\right).
\end{eqnarray*}
In particular, $M$  is a $N^2\times N^2$ doubly  block banded Toeplitz matrix, i.e., a block banded Toeplitz matrix with its blocks are banded Toeplitz matrices \cite{jin2002}.
%Let $n=N^2$ and let $\mathcal{T}$  denote the set of all matrices like $M$ in \eqref{conv0}, i.e., doubly  block banded Toeplitz matrices with the fixed bandwidth.

To minimize ${\cal R}_\alpha (K) =\sigma_{max}(M^TM-\alpha I)$, we derive a formula for its gradient with respect to $K$, i.e., $\partial \sigma_{max} (M^TM-I)/\partial k_{p,q}$ with $M=M(K)$ being the transformation matrix defined from $K$ in (\ref{conv0}) for each entry $k_{p,q}$ of the convolution kernel.  Our result provides a framework to use ${\cal R}_\alpha (K)$ as a regularization term in the optimization of ${\cal L} (K)$ in convolutional neural networks.
%calculate the gradient of the penalty function about transformation matrix versus the convolution kernel. People can construct other penalty function about $M$ and get the gradient descent method when training their convolutional networks.
To compute the gradient, we need the following classical result on the first order perturbation expansion about a simple singular value; see \cite{stewart}.
\begin{lemma}\label{lem1}
Let $\sigma$ be a simple singular value of $A = [a_{ij}] \in\mathbb{R}^{m \times m}$
 ($n\geq p$) with normalized left and right singular vectors $u$ and $v$.  Then $\partial \sigma/\partial a_{ij}$  is $u(i)v(j)$, where $u(i)$ is the $i$-th entry of vector $u$ and $v(j)$ is the $j$-th entry of vector $v$.
\end{lemma}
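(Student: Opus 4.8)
The plan is to reduce the coordinate formula $\partial\sigma/\partial a_{ij} = u(i)v(j)$ to the coordinate-free identity that, along any smooth one-parameter family $A(t)$ of matrices with $A(0)=A$, the simple singular value obeys $\dot\sigma = u^T \dot A\, v$, and then to specialize to the family in which only the $(i,j)$ entry of $A$ is perturbed. First I would record the defining relations of a singular triple, $Av = \sigma u$ and $A^T u = \sigma v$, together with the normalizations $u^T u = v^T v = 1$ that are built into the hypothesis.

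The essential analytic input, which I would invoke before differentiating anything, is that because $\sigma$ is a \emph{simple} singular value the triple $(\sigma(t),u(t),v(t))$ can be chosen to depend smoothly (indeed analytically) on $t$ near $t=0$. This is inherited from the perturbation theory of the simple eigenvalue $\sigma^2$ of $A^T A$ (equivalently $A A^T$), which one can establish via the implicit function theorem applied to the eigenvalue equation; this is exactly where simplicity is indispensable, since for a repeated singular value the map $A\mapsto\sigma$ need not even be differentiable. I expect this differentiability justification to be the main obstacle, in the sense that it is the only nonroutine ingredient and everything after it is a two-line computation. In the paper it is deferred to the cited reference \cite{stewart}.

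Granting differentiability, I would differentiate $A(t)v(t) = \sigma(t)u(t)$ at $t=0$ to get $\dot A\, v + A\dot v = \dot\sigma\, u + \sigma\dot u$, and then left-multiply by $u^T$. The relation $u^T A = (A^T u)^T = \sigma v^T$ converts the second left-hand term into $\sigma v^T\dot v$, while $u^T u = 1$ reduces the first right-hand term to $\dot\sigma$. Differentiating the normalizations $u^T u = v^T v = 1$ yields $u^T\dot u = 0$ and $v^T\dot v = 0$, which annihilate the two remaining terms and leave $\dot\sigma = u^T \dot A\, v$.

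Finally I would take the family $A(t) = A + t\,e_i e_j^T$, so that $\dot A = e_i e_j^T$ and hence $\partial\sigma/\partial a_{ij} = u^T e_i e_j^T v = u(i)v(j)$, which is the claimed formula. An alternative route, which I would keep in reserve, is to apply the standard first-order expansion of the simple eigenvalue $\sigma^2$ of $A^T A$ and then use $\sigma = \sqrt{\sigma^2}$; this gives the same answer but requires the same simplicity-based differentiability and an extra chain-rule step, so the direct argument above is cleaner.
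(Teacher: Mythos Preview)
Your argument is correct: differentiating $A(t)v(t)=\sigma(t)u(t)$, contracting with $u^T$, and using $u^TA=\sigma v^T$ together with $u^T\dot u=v^T\dot v=0$ gives $\dot\sigma=u^T\dot A\,v$, and the specialization $\dot A=e_ie_j^T$ yields $u(i)v(j)$. You also correctly identify that the only nontrivial step is the smooth dependence of $(\sigma,u,v)$ on $t$, which follows from simplicity via the implicit function theorem applied to the eigenvalue problem for $A^TA$.

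As for comparison with the paper: the paper does not prove this lemma at all. It is stated as a classical result and deferred entirely to the reference \cite{stewart}; your proposal therefore supplies strictly more than what the paper provides. Your direct differentiation of the singular-vector equation is in fact the standard argument one finds in Stewart, so in that sense the approaches coincide, but nothing in the paper's own text needs to be matched.
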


For our situation, we need to consider perturbation of $M^T M$ when $M$ is changed. Clearly, if an entry $m_{ij}$  changes, only the entries belonging to $j$-th row or $j$-th volume of the matrix $M^TM$ are affected.
Actually, we have the following lemma.

\begin{lemma}\label{lem2}
Let $M=[m_{ij}]\in \mathbb{R}^{m\times n}$ and let $\sigma_{max}(M^TM-\alpha I)$ be the largest singular value of $M^TM-\alpha I$ with $u$ and $v$  normalized left and right singular vectors. Assuming $\sigma_{max}(M^TM-\alpha I)$ is simple and positive, we have
\begin{equation}\label{dm}
\frac{\partial \sigma_{max}(M^TM-\alpha I)}{\partial m_{ij}} =
v(j)u^T M^Te_i +u(j)e_i^T M v
\end{equation}
%
%For  $A = [a_{ij}] \in\mathbb{R}^{m \times n}$, we have
%\begin{equation}\label{eq:ata}
%\frac{\partial(A^TA)}{\partial a_{ij}}=A^T(e_ie_j^T)+(e_je_i^T)A
%=\left( \begin{array}{cccccccc}
%   0 & \cdots & \cdots & 0 & a_{i1} & 0 & \cdots & 0 \\
%   \vdots & \vdots & \vdots & \vdots & a_{i2} & \vdots & \vdots& \vdots \\
%  \vdots & \vdots & \vdots& \vdots & \vdots & \vdots & \vdots & \vdots \\
%   0 & \cdots &\cdots & 0 & a_{i,j-1} & 0 & \cdots & 0 \\
%   a_{i1} & a_{i2} & \cdots & a_{i,j-1} &2 a_{ij} & a_{i,j+1} & \cdots &  a_{in} \\
%   0 & \cdots & \cdots & 0 & a_{i,j+1} & 0 & \cdots &0 \\
%   \vdots & \vdots & \vdots & \vdots & \vdots & \vdots & \vdots & \vdots \\
%   0 & \cdots & \cdots & 0 & a_{in} & 0 & \cdots & 0
% \end{array}
% \right).
%\end{equation}
where $e_k$ denotes the $k$-th column of the $n\times n$ identity matrix.
%$\partial(A^TA)_{s,t}/ \partial a_{ij}$ is the $(s,t)$ entry of the  matrix $D=A^T(e_ie_j^T)+(e_je_i^T)A$, where
\end{lemma}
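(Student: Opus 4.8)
The plan is to express $\sigma:=\sigma_{max}(M^TM-\alpha I)$ as a composition and differentiate it with the chain rule, using Lemma~\ref{lem1} to handle the outer function. Write $B=B(M):=M^TM-\alpha I=[b_{k\ell}]$, which is a smooth (indeed quadratic) matrix-valued function of the entries $m_{ij}$ of $M$. Since by hypothesis $\sigma$ is a simple, positive singular value of $B$, Lemma~\ref{lem1} tells us that, regarding $\sigma$ as a function of the entries of a general square matrix, its partial derivatives at $B$ are $\partial\sigma/\partial b_{k\ell}=u(k)v(\ell)$. The derivative we want then reduces to assembling these via
\[
\frac{\partial\sigma}{\partial m_{ij}}=\sum_{k,\ell}\frac{\partial\sigma}{\partial b_{k\ell}}\,\frac{\partial b_{k\ell}}{\partial m_{ij}}=\sum_{k,\ell}u(k)v(\ell)\,\frac{\partial b_{k\ell}}{\partial m_{ij}}.
\]

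First I would compute the inner derivatives directly from the definition $b_{k\ell}=\sum_s m_{sk}m_{s\ell}-\alpha\delta_{k\ell}$, where $\delta$ denotes the Kronecker delta. Differentiating the product and summing over $s$ gives $\partial b_{k\ell}/\partial m_{ij}=\delta_{kj}\,m_{i\ell}+\delta_{\ell j}\,m_{ik}$, which is nonzero only when $k=j$ or $\ell=j$; this is exactly the observation, noted before the lemma, that altering $m_{ij}$ perturbs only the $j$-th row and the $j$-th column of $M^TM$. Substituting into the sum above and using the Kronecker deltas to collapse one index in each term leaves the two single sums
\[
\frac{\partial\sigma}{\partial m_{ij}}=u(j)\sum_{\ell}m_{i\ell}v(\ell)+v(j)\sum_{k}m_{ik}u(k).
\]
To finish I would recognize each single sum as a matrix-vector contraction, namely $\sum_{\ell}m_{i\ell}v(\ell)=(Mv)_i=e_i^T M v$ and $\sum_{k}m_{ik}u(k)=(Mu)_i=e_i^T M u=u^T M^T e_i$, which yields precisely the claimed formula \eqref{dm}.

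The step I expect to require the most care is the legitimacy of invoking Lemma~\ref{lem1} at the symmetric matrix $B$. Lemma~\ref{lem1} is phrased for a matrix whose entries vary independently, whereas $M^TM-\alpha I$ is constrained to be symmetric, so the two off-diagonal entries $b_{k\ell}$ and $b_{\ell k}$ are not free. The correct viewpoint is to treat $\sigma$ as a function on the whole space of square matrices, where simplicity (and positivity, ensuring we are at a nonzero singular value) guarantees differentiability with gradient $uv^T$, and only afterwards to compose with the smooth map $M\mapsto M^TM-\alpha I$. Under this setup the chain rule applies verbatim with no special treatment of the symmetry, and everything that remains is the routine index bookkeeping sketched above.
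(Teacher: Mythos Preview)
Your proof is correct and follows essentially the same route as the paper: both apply Lemma~\ref{lem1} together with the chain rule to $\sigma_{max}$ composed with $M\mapsto M^TM-\alpha I$, the only difference being that the paper packages the inner derivative in matrix form as $\partial A/\partial m_{ij}=M^T(e_ie_j^T)+(e_je_i^T)M$ and then contracts with $u^T(\cdot)v$, while you compute the same quantity entrywise via $\partial b_{k\ell}/\partial m_{ij}=\delta_{kj}m_{i\ell}+\delta_{\ell j}m_{ik}$. Your closing remark about why Lemma~\ref{lem1} may be invoked at the symmetric point $B$ despite the constrained entries is a worthwhile clarification that the paper leaves implicit.
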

%\begin{proof}
%Let $A=[a_{ij}]=M^TM-\alpha I$. A direct calculation yields
%\[
%\frac{\partial(A)}{\partial a_{ij}} =
%A^T \frac{\partial(A)}{\partial a_{ij}} + \frac{\partial(A^T)}{\partial a_{ij}} A  =A^T(e_ie_j^T)+(e_je_i^T)A.
%\]
%\end{proof}
\begin{proof}
Let $A=[a_{ij}]=M^TM-\alpha I$. A direct calculation yields
\[
\frac{\partial A}{\partial m_{ij}} =
M^T \frac{\partial M}{\partial m_{ij}} + \frac{\partial(M^T)}{\partial m_{ij}} M  =M^T(e_ie_j^T)+(e_je_i^T)M.
\]
It follows from  this, lemma~\ref{lem1}  and the chain rule that
\begin{eqnarray*}\label{derivative1}
 \nonumber  \frac{\partial \sigma_{max}(A)}{\partial m_{ij}}  &=& \sum_{s=1}^n \sum_{t=1}^{n} \frac{\partial \sigma_{max}(A)}{\partial a_{s,t}}\frac{\partial a_{s,t}}{\partial m_{ij}} \\
 &=& \sum_{s=1}^n \sum_{t=1}^{n} u(s) v(t) \frac{\partial a_{s,t}}{\partial m_{ij}}
  \\
 &=& u^T \frac{\partial A}{\partial m_{ij}} v \\
 &=& u^T (M^T(e_ie_j^T)+(e_je_i^T)M) v \\
 &=& v(j)u^T M^Te_i +u(j)e_i^T M v %\\
%  &=& \sum_{t=1}^{n} u_j v(t) m_{it}+\sum_{s=1,\cdots,n}u(s) v(j) m_{is}.
\end{eqnarray*}
\end{proof}

We can now derive a formula for the gradient descent of $\sigma_{max}(M^TM-I)$ with respect to the convolution kernel $K$ as follows.

\begin{theorem}\label{theo}
Assume the largest singular value of $M^TM-\alpha I$, denoted by $\sigma_{max}(M^TM-\alpha I)$, is simple and positive, where $M=[m_{ij}]=M(K)\in \mathbb{R}^{n\times n}$ is the doubly  block banded Toeplitz matrix \eqref{conv0} corresponding to a one channel convolution kernel $K=[k_{ij}]\in\mathbb{R}^{k\times k}$. Assume $u$ and $v$  are normalized left and right singular vectors of $M^TM-\alpha I$ associated with $\sigma_{max}(M^TM- \alpha I)$. Given $(p,q)$, if $\Omega_{p,q}$ denotes the set of all indexes $(i,j)$ such that $m_{ij}=k_{p,q}$,  we have
\begin{equation}\label{derivative3}
    \frac{\partial \sigma_{max}(M^TM- \alpha I)}{\partial k_{p,q}}= \sum_{(i,j)\in\Omega_{p,q}}(\sum_{t=1}^{n}u(j)v(t)m_{it}+\sum_{s=1}^{n}u(s)v(j)m_{is}).
\end{equation}
\end{theorem}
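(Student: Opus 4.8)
The plan is to derive \eqref{derivative3} from the single-entry formula of Lemma~\ref{lem2} via the chain rule, the one genuinely new ingredient being that the banded Toeplitz structure \eqref{conv0} forces each kernel entry $k_{p,q}$ to be replicated in several positions of $M$. First I would regard $\sigma_{max}(M^TM-\alpha I)$ as a function of the independent matrix entries $m_{ij}$, and then view $M=M(K)$ as a map from the kernel parameters $\{k_{p,q}\}$ into $\mathbb{R}^{n\times n}$. The chain rule then gives
\begin{equation*}
\frac{\partial \sigma_{max}(M^TM-\alpha I)}{\partial k_{p,q}} = \sum_{i=1}^n\sum_{j=1}^n \frac{\partial \sigma_{max}(M^TM-\alpha I)}{\partial m_{ij}}\cdot\frac{\partial m_{ij}}{\partial k_{p,q}}.
\end{equation*}

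Next I would observe that in \eqref{conv0} every nonzero entry of $M$ is a verbatim copy of exactly one kernel entry (no entry is a sum of several kernel parameters), so that $\partial m_{ij}/\partial k_{p,q}$ equals the indicator of the set $\Omega_{p,q}$: it is $1$ when $(i,j)\in\Omega_{p,q}$ and $0$ otherwise. This collapses the double sum to a sum over $\Omega_{p,q}$, and substituting the expression \eqref{dm} from Lemma~\ref{lem2} yields
\begin{equation*}
\frac{\partial \sigma_{max}(M^TM-\alpha I)}{\partial k_{p,q}} = \sum_{(i,j)\in\Omega_{p,q}}\left( v(j)\,u^TM^Te_i + u(j)\,e_i^TMv\right).
\end{equation*}

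It then remains only to expand the two scalar bilinear forms into the component sums appearing in \eqref{derivative3}. Since $M^Te_i$ is the $i$-th row of $M$ written as a column vector, one has $u^TM^Te_i=\sum_{s=1}^n u(s)m_{is}$, whence $v(j)\,u^TM^Te_i=\sum_{s=1}^n u(s)v(j)m_{is}$; similarly $e_i^TMv=\sum_{t=1}^n m_{it}v(t)$, whence $u(j)\,e_i^TMv=\sum_{t=1}^n u(j)v(t)m_{it}$. Summing these two contributions over $(i,j)\in\Omega_{p,q}$ reproduces \eqref{derivative3} exactly.

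The computation is essentially mechanical once Lemma~\ref{lem2} is available; the single point deserving care—and the only real subtlety—is the bookkeeping in the chain rule, namely verifying that the map $K\mapsto M(K)$ has each nonzero entry depending on a single kernel parameter, so that the Jacobian factors $\partial m_{ij}/\partial k_{p,q}$ are precisely the indicator of $\Omega_{p,q}$. This is immediate from the doubly block banded Toeplitz form \eqref{conv0}, in which the entries are either $0$ or an unmodified copy of some $k_{p,q}$.
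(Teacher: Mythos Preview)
Your proof is correct and follows essentially the same route as the paper: apply the chain rule through the parametrization $K\mapsto M(K)$, use the Toeplitz structure to reduce $\partial m_{ij}/\partial k_{p,q}$ to the indicator of $\Omega_{p,q}$, and then invoke Lemma~\ref{lem2}. Your expansion of the bilinear forms $v(j)\,u^TM^Te_i$ and $u(j)\,e_i^TMv$ into the component sums is slightly more explicit than the paper's, but the argument is the same.
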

\begin{proof}
%Let $A=[a_{ij}]=M^TM-\alpha I$. It follows from  lemma~\ref{lem1}, lemma~\ref{lem2} and the chain rule that
%\begin{eqnarray*}\label{derivative1}
% \nonumber  \frac{\partial \sigma_{max}(A)}{\partial m_{ij}}  &=& \sum_{s=1}^n \sum_{t=1}^{n} \frac{\partial \sigma_{max}(A)}{\partial a_{s,t}}\frac{\partial a_{s,t}}{\partial m_{ij}} \\
% &=& \sum_{s=1}^n \sum_{t=1}^{n} u(s) v(t) \frac{\partial a_{s,t}}{\partial m_{ij}}
%  \\
% &=& u^T \frac{\partial A}{\partial m_{ij}} v \\
% &=& u^T \frac{\partial M^TM}{\partial m_{ij}} v \\
% &=& u^T (M^T(e_ie_j^T)+(e_je_i^T)M) v \\
% &=& u^T M^Te_i v_j+u_je_i^T M v \\
%  &=& \sum_{t=1}^{n} u_j v(t) m_{it}+\sum_{s=1,\cdots,n}u(s) v(j) m_{is}.
%\end{eqnarray*}
%For a matrix $M\in\mathcal{T}$,  The value of $k_{p,q}$ will appear in different  $(i,j)$ indexes. We use $\Omega$ to denote this index set, i.e., for each $(i,j)\in\Omega$ , we have   $m_{ij}=k_{p,q}$.
%
From (\ref{conv0}),  $m_{ij}$ is either 0 or equal to some $k_{p,q}$. Indeed, $m_{ij}=k_{p,q}$ if and only if $(i,j)\in\Omega_{p,q}$. Now, applying the chain rule  to calculate $\partial \sigma_{max}(M^TM-I)/\partial k_{p,q}$ and using Lemma \ref{lem2}, we have
\begin{eqnarray*}\label{derivative2}
% \nonumber to remove numbering (before each equation)
  \nonumber  \frac{\partial \sigma_{max}(M^TM-I)}{\partial k_{p,q}}
  &=&  \sum_{i=1}^n \sum_{j=1}^{n}  \frac{\partial \sigma_{max}(M^TM-I)}{\partial m_{ij}} \frac{\partial m_{ij}}{\partial k_{p,q}} \\
  &=& \sum_{(i,j)\in\Omega_{p,q}} \frac{\sigma_{max}(M^TM-I)}{\partial m_{ij}}\\
  &=& \sum_{(i,j)\in\Omega_{p,q}}(\sum_{t=1}^{n}u(j)v(t)m_{it}+\sum_{s=1}^{n}u(s)v(j)m_{is}).
\end{eqnarray*}
\end{proof}

We remark that $M^TM-I$ in the above theorem is a symmetric matrix. Then its largest singular value $\sigma_{max}(M^TM-\alpha I)$ is either its largest eigenvalue or the absolute value of its smallest eigenvalue. Then the left singular vector $u$ is equal to $v$ or $-v$ respectively.

With the gradient, we can minimize $\sigma_{max}(M^TM-\alpha I)$ with respect to $K$ using an optimization method. In convolutional neural networks, the number of parameters are usually so large that a first order method such as gradient descent is typically used. We therefore also consider the gradient descent method
\[
K \leftarrow K - \lambda \nabla \sigma_{max}(M^TM-\alpha I)
\]
where $\lambda$ is a step size parameter called learning rate. Then, at each step of iteration, to get the gradient, we need to compute $\sigma_{max}(M^TM-\alpha I)$ and the associated left and right singular vector. Although the dimension of $M$ is large, $M$ is quite sparse and we can compute a few largest singular values efficiently with Krylov subspace methods \cite{BaglamaReichel:svd,arpack,LiangYe:svdifp}. Moreover, a Toeplitz matrix can be embedded into a circulant matrix and the matrix-vector multiplication can be efficiently computed using the fast Fourier transform  by exploiting the convolution structure;  see \cite{chan1996,jin2002} and the reference therein.
Nevertheless, this may be computationally costly, since the gradient descent algorithm may require a large number of iterations and hence repeated computations of the gradients. On the other hand, with $\lambda$ usually being very small,  each step of iterations involve a small change in $K$ and  $\sigma_{max}(M^TM-\alpha I)$. We therefore suggest to use the power method to update the largest singular value and singular vectors at each step of gradient descent step. Our experiences indicates that a few iterations are usually sufficient. With this approach, one potential issue is that the largest singular value may be overtaken by the second largest singular value during the iterations. As a remedy, we keep and update the two largest singular values and singular vectors and select the larger one after each update as the largest singular value.
We will present a detailed algorithm in Section \ref{sec:multi} in the more general context of multichannel convolution.

\subsection{Multi-channel convolution}\label{sec:multi}
We now generalize the result in Section \ref{sec:one} to  multichannel convolutions. Consider a 4-dimensional tensor convolution kernel $K=[k_{i,j,k,l}]\in \mathbb{R}^{k\times k\times g  \times h}$ and a 3-dimensional tensor  input $X=[x_{i,j,k}] \in \mathbb{R}^{N\times N\times g}
$.  Let $Y=[y_{i,j,k}] \in \mathbb{R}^{N \times N \times h}$ be the 3-dimensional tensor output produced by the convolution $Y=K*X$ as defined in (\ref{eq:Mchannel}). Let $vec(X)$ denote the vectorization of $X$, i.e.
\[
vec(X) = [x_{:,1,1}^T,\ldots,x_{:,N,1}^T,x_{:,1,2}^T,\ldots,x_{:,N,2}^T, \ldots,x_{:,1,g}^T,\ldots,x_{:,N,g}^T]^T
\]
where we have used MATLAB notation $x_{:, i, j}:=[x_{1,i,j},\ldots,x_{N,i,j}]^T$.

In  this notation, the convolution operation is expressed as  $vec(Y)=\M vec(X)$, where
\begin{eqnarray}\label{conv2}
\M=\M(K):=\left(\begin{array}{cccc}
M_{(1)(1)} & M_{(1)(2)} & \cdots& M_{(1)(g)} \\
M_{(2)(1)} & M_{(2)(2)} & \cdots & M_{(2)(g)} \\
\vdots &\vdots &\cdots  &\vdots \\
M_{(h)(1)}& M_{(h)(2)}& \cdots &M_{(h)(g)}
\end{array}\right),
\end{eqnarray}
and   %$B_{(c)(d)}\in \mathcal{T}$, i.e.,
$M_{(c)(d)} = M(K_{:,:,d,c})$ is a $N^2\times N^2$ doubly  block banded Toeplitz matrix as defined in (\ref{conv0}) from the 2-dimensional kernel  $K_{:,:,d,c}$. Namely, $M_{(c)(d)}$  is the transformation matrix corresponding to 2-dimensional kernel $K_{:,:,d,c}$ that convolutes with the  $d$-th input channel to produce the $c$-th output channel.
%$n=N^2$ and use $\mathcal{BT}$ to denote the set of all matrices like $M$ in \eqref{conv2}, i.e., matrices whose blocks are doubly  block banded Toeplitz matrices with the fixed bandth.

As in Section \ref{sec:one}, we are interested in minimizing $\sigma_{max} (\M^T\M-\alpha I)$ with respect to $K$. We can easily generalize Theorem \ref{theo} to the multichannel case  as follows; the proof follows from Lemma \ref{lem2} as in that of  Theorem \ref{theo} and is omitted here.
\begin{theorem}\label{theo2}
Assume the largest singular value of $\M^T\M-\alpha I$  is simple and positive, where $\M=\M(K)$   is the structured matrix corresponding to the multichannel convolution kernel $K\in\mathbb{R}^{k\times k\times g \times h}$ as defined in (\ref{conv2}).
%each block $B_{(y)(z)}$ is a $n\times n$ doubly  block banded Toeplitz matrix corresponding to the portion $K_{:,:,z,y}$.
Assume $u, v$ are the normalized left  and right singular vectors corresponding to $\sigma_{max}(\M^T\M-I)$. Given $(p,q,z,y)$, if $\Omega_{p,q,z,y}$ is the set of all indexes $(i,j)$ such that $m_{ij}=k_{p,q,z,y}$,  we have
\begin{equation}\label{derivative4}
    \frac{\partial \sigma_{max}(M^TM-I)}{\partial k_{p,q,z,y}}= \sum_{(i,j)\in\Omega_{p,q,z,y}}(\sum_{t=1}^{g*N^2}u(j)v(t)m_{it}+\sum_{s=1}^{g*N^2}u(s)v(j)m_{is}).
\end{equation}
where $m_{ij}$ is the $(i,j)$ entry of $\M$.
\end{theorem}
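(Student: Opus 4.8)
The plan is to repeat the argument of Theorem~\ref{theo} almost verbatim, replacing the one-channel matrix $M$ by the block matrix $\M$ of \eqref{conv2}. The key observation making this possible is that Lemma~\ref{lem2} is already stated for an \emph{arbitrary} rectangular matrix, so it applies directly to $\M\in\R^{hN^2\times gN^2}$ with $\M^T\M-\alpha I\in\R^{gN^2\times gN^2}$ and its left and right singular vectors $u,v\in\R^{gN^2}$.

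First I would record the combinatorial structure of $\M$. By \eqref{conv2} together with the identification $M_{(c)(d)}=M(K_{:,:,d,c})$, every entry $m_{ij}$ of $\M$ is either $0$ or equals exactly one kernel entry $k_{p,q,z,y}$; moreover $k_{p,q,z,y}$ appears only inside the single block $M_{(y)(z)}$, where it occupies precisely the doubly banded Toeplitz positions described by \eqref{conv0} in the one-channel case. Hence $\Omega_{p,q,z,y}=\{(i,j):m_{ij}=k_{p,q,z,y}\}$ is well defined, these sets are pairwise disjoint, and $\partial m_{ij}/\partial k_{p,q,z,y}$ equals $1$ on $\Omega_{p,q,z,y}$ and $0$ elsewhere.

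Next I would apply the chain rule over all entries of $\M$, as in the proof of Theorem~\ref{theo}, to obtain
\[
\frac{\partial \sigma_{max}(\M^T\M-\alpha I)}{\partial k_{p,q,z,y}}
=\sum_{(i,j)\in\Omega_{p,q,z,y}}\frac{\partial \sigma_{max}(\M^T\M-\alpha I)}{\partial m_{ij}},
\]
and then substitute the per-entry derivative furnished by Lemma~\ref{lem2}, namely $v(j)u^T\M^Te_i+u(j)e_i^T\M v$. Expanding the two inner products as $u^T\M^Te_i=\sum_t u(t)m_{it}$ and $e_i^T\M v=\sum_s m_{is}v(s)$, with the running indices $s,t$ now ranging over the $gN^2$ columns of $\M$, yields exactly \eqref{derivative4}. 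The hypotheses that $\sigma_{max}(\M^T\M-\alpha I)$ be simple and positive are used in precisely the same place as before: they guarantee the validity of Lemma~\ref{lem1}, and hence of Lemma~\ref{lem2}, for $\M$.

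The only genuinely new ingredient, and thus the step I expect to require the most care, is the block bookkeeping in the first step: one must verify that a fixed kernel entry $k_{p,q,z,y}$ is confined to the single block $M_{(y)(z)}$ and there repeats exactly along the banded Toeplitz diagonals, so that $\Omega_{p,q,z,y}$ is simply the one-channel set $\Omega_{p,q}$ transplanted into the correct $(y,z)$ block. Once this is in place, the remaining manipulations are identical to those of Theorem~\ref{theo}, the sole change being that the summations extend over $gN^2$ rather than $N^2$ indices; no additional analytic work is needed.
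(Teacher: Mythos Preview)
Your proposal is correct and is exactly the approach the paper intends: the paper itself omits the proof, stating only that it ``follows from Lemma~\ref{lem2} as in that of Theorem~\ref{theo},'' and your write-up carries this out faithfully, with the block bookkeeping you flag being the only new (and straightforward) ingredient.
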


As discussed at the end of Subsection \ref{sec:one}, we can use the derivative in a gradient descent iteration to minimize ${\cal R}_\alpha (K) $  with respect to $K$. We  give a detailed description of the full procedure in the following algorithm.

\begin{algorithm} \label{alg1}
\noindent \textbf{Gradient Descent for ${\cal R}_\alpha (K) =\sigma_{max} (M^TM-\alpha I)$.}
\begin{tabbing}
aaaaa \= bbbb\= \kill
1. \> Input: an initial kernel $K\in \mathbb{R}^{k\times k\times g  \times h}$, input size $N\times N\times g$   and learning rate $\lambda$.\\
%2. \>Form the transformation matrix $M$ as \eqref{conv2}, get the index sets $\Omega_{p,q,z,y}$ corresponding to $k_{p,q,z,y}$, i.e., $k_{p,q,z,y}=m_{i,j}$ holds for all $(i,j)\in \Omega_{p,q,z,y}$.\\
2. \>Compute $(\sigma_1, u_1, v_1)$ and $(\sigma_2, u_2, v_2)$, i.e. the first and the second largest singular values and the associated \\
 \> normalized left and right singular vectors of $\M^T\M-\alpha I$ where $\M=\M(K)$ is defined in (\ref{conv2});\\
3. \> set $u=u_1, v=v_1$.\\
4. \>While not converged:\\
4. \>\>Compute $G= [\frac{\partial \sigma_{max}(M^TM-I)}{\partial k_{p,q,z,y}}]_{p,q,z,y=1}^{k,k,g,h} $, by \eqref{derivative4}; \\
5. \>\> Update $K=K-\lambda G$;\\
6. \>\>Update  $(\sigma_1, u_1, v_1)$ and $(\sigma_2, u_2, v_2)$ using the power method;  \\
7. \>\>If $\sigma_1 \ge \sigma_2$,  $u=u_1, v=v_1$; \\
   \>\>else, $u=u_2, v=v_2$; \\
8. \>End
\end{tabbing}
\end{algorithm}

\section{Numerical experiments}\label{sec:numer}
In this section, we present two numerical examples to illustrate our method. We study performance of our method with respect to different sizes of convolution kernels and different values of $\alpha$ in the regularization function ${\cal R}_\alpha (K)$.  All  numerical  tests were performed on a PC with MATLAB R2016b.

In both examples, we start from a random kernel with each entry uniformly distributed on $[0, 1]$, i.e. in MATLAB, ${\tt K= rand(k,k,g,h)}$ with ${\tt rand('state',1)}$. We then minimize ${\cal R}_\alpha (K)$ using Algorithm \ref{alg1} and we demonstrate the beneficial effect of reducing the condition number of $M$, or  decreasing $\sigma_{max}(M)$ while maintaining $\sigma_{min}(M)$. In our numerical experiments, we have used  $\lambda=0.01$. At step 6 of Algorithm \ref{alg1} to update the singular values of $M$, we have experimented using the power method with two iterations  as well as using the full SVD decomposition. We have found the results are comparable and we present the one based on the power method only.

{\sc Example} 1: We consider kernels of different sizes with $3\times 3$ filters in this example, namely $K\in \mathbb{R}^{3\times 3\times g  \times h}$ for various values of $g,h$.  For each kernel, we use the input data matrix of size   $15\times 15\times g$.  We use the penalty function  ${\cal R}_1 (K) = \sigma_{max}(M^TM- I)$. We present in Figure~\ref{fig1} the results of $3\times 3\times 3  \times 1$, $3\times 3\times 1  \times 3$, $3\times 3\times 3  \times 6$, and $3\times 3\times 6  \times 3$ kernels. In the figures, we have shown the convergence of $\sigma_{max}(M^TM- I)$ (red solid line) on the right axis scale, and  $\sigma_{max}(M)$ (blue solid line), $\sigma_{min}(M)$ (blue dashed line), and the condition number $ \kappa(M)$ (blue dotted line) on the left axis scale.

For all kernel sizes, $\sigma_{max}(M^TM- I)$ converges well within 20 iterations. The condition number $ \kappa(M)$  and $\sigma_{max}(M^TM- I)$ decreases accordingly. $\sigma_{min}(M)$ does not change significantly, however. It appears minimizing ${\cal R}_1 (K)$ is more effective in decreasing  $ \kappa(M)$  and $\sigma_{max}(M^TM- I)$ but less so in increasing $\sigma_{min}(M)$. The kernel sizes mainly affect the final converged values but not the convergence behavior.

%and $\sigma_{max}(M^TM- I)$ on the right side for each kernel to see the change of these values as the iteration steps become larger.  At the initial step $\sigma_{min}(M)$ is close to 1 and as the iteration steps become larger, we see $\sigma_{min}(M)$ almost  unchanged but $\sigma_{max}(M)$ decreases and becomes closer to $1$. Gradually  $\sigma_{max}(M)$ and $\sigma_{min}(M)$ are all close to 1.
\begin{figure}[h]
  \centering
  \includegraphics[width=1.00\textwidth]{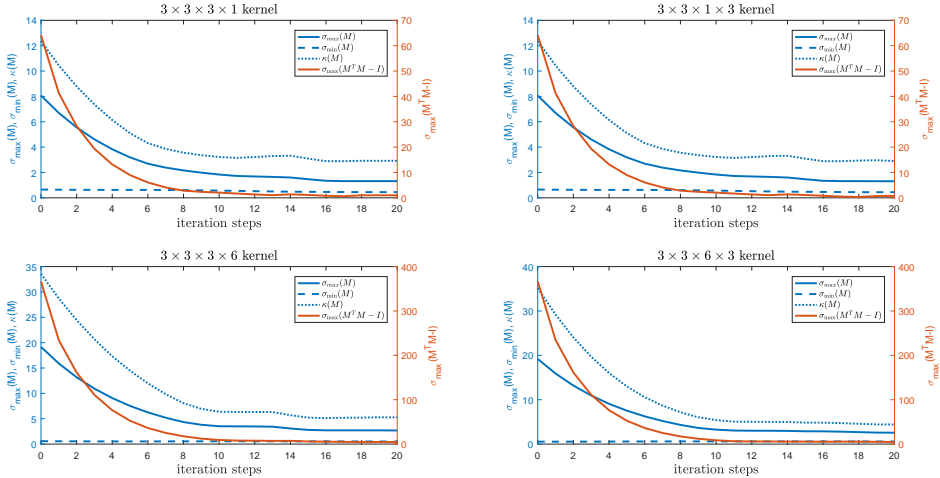}
\caption{\text{\small{Example 1: Convergence of $\sigma_{max}(M), \sigma_{min}(M), \kappa(M), \sigma_{max}(M^TM- I)$  for four kernel sizes}}}
\label{fig1}
  \end{figure}

{\sc Example} 2: We consider kernels of size $3\times 3\times 3  \times 1$ and use ${\cal R}_\alpha (K)= \sigma_{max}(M^TM-\alpha I)$ with $\alpha =0.1, 1, 5,$ and $ 10$.   We present in Figure~\ref{fig2} the convergence of $\sigma_{max}(M^TM- I)$ (red solid line) on the right axis scale, and  $\sigma_{max}(M)$ (blue solid line), $\sigma_{min}(M)$ (blue dashed line), and the condition number $ \kappa(M)$ (blue dotted line) on the left axis scale.

For all values of $\alpha$, $\sigma_{max}(M^TM- \alpha I)$ converges to a value dependent on $\alpha$. The condition number $ \kappa(M)$  and $\sigma_{max}(M^TM- \alpha I)$ decreases accordingly. For the larger values of $\alpha$,  the convergence appears faster. For example, for $\alpha=5 $ and $10$, $\sigma_{max}(M^TM- \alpha I)$ reaches minimum a little below the values of $\alpha$ at the $6$th and the $4$th iteration. Even though the minimum values are also larger than other cases, it has similar effect in reducing $ \kappa(M)$  and $\sigma_{max}(M^TM- \alpha I)$ as suggested by Theorem \ref{remark1}. An interesting observation is that after $\sigma_{max}(M^TM- \alpha I)$ reaches a value smaller than $\alpha$, it increases back to a level of $\alpha$. It appears there may be a theoretical barrier to reducing $\sigma_{max}(M^TM- \alpha I)$ much below $\alpha$.

\begin{figure}[h]
  \centering
  \includegraphics[width=1.00\textwidth]{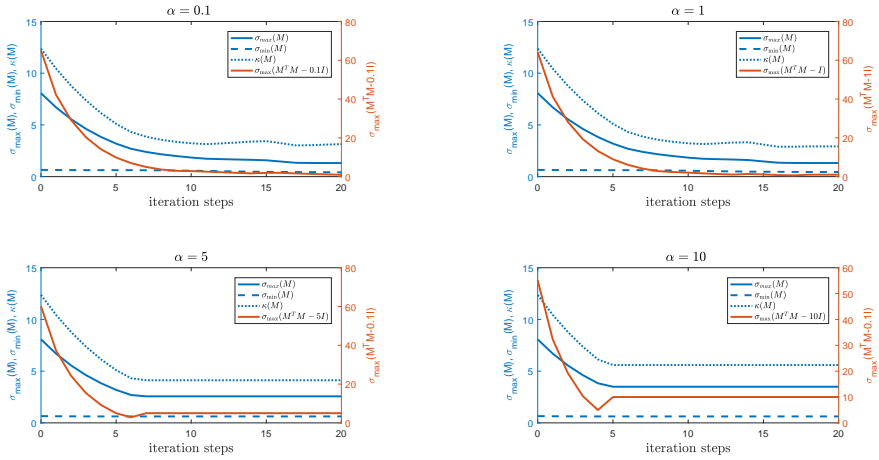}
\caption{\text{\small{Example 2: Convergence of $\sigma_{max}(M), \sigma_{min}(M), \kappa(M), \sigma_{max}(M^TM- \alpha I)$ for different $\alpha$}}}
\label{fig2}
  \end{figure}

\section{Conclusions}\label{sec:conclu}
In this paper, we have considered how to regularize the weights of convolutional layers in convolutional neural networks.
The goal  is to constrain  the singular values of the structured transformation matrix corresponding to a convolutional kernel to be neither too large nor too small. We have devised the penalty function and proposed the gradient decent method for the convolutional kernel to achieve this. Numerical examples demonstrate its effectiveness for different size of convolution kernels. We have also proposed a more general penalty function  ${\cal R}_\alpha (K)$ and have observed some interesting behavior with respect to the choice of $\alpha$. It will be interesting to further investigate this, which is left to a future work.

\section{Acknowledgements}
The authors are grateful to Professor Xinguo Liu at Ocean University of China and Professor Beatrice Meini at University of Pisa for their valuable suggestions.


\begin{thebibliography}{99}

\bibitem{Arjo16}
M. Arjovsky, A. Shah, and Y. Bengio. 
\newblock Unitary evolution recurrent neural networks.
\newblock In ICML, 2016.

\bibitem{BaglamaReichel:svd}
J.~Baglama and L.~Reichel.
\newblock Augmented implicitly restarted {Lanczos} bidiagonalization methods.
\newblock  SIAM J. Sci. Comp., 27:19--42, 2005.

\bibitem{brock2017}
Andrew Brock, Theodore Lim, James M Ritchie, and Nick Weston. Neural photo editing with introspective
adversarial networks. In ICLR, 2017.
\bibitem{chan1996}
R. Chan and M. Ng, Conjugate Gradient Methods for Toeplitz Systems, SIAM Review, 38(3): 427-482, 1996.
\bibitem{chan2007}
 R. Chan and X. Jin, An Introduction to Iterative Toeplitz Solvers, SIAM, Philadelphia, 2007.
 \bibitem{cisse}
 Moustapha Cisse, Piotr Bojanowski, Edouard Grave, Yann Dauphin, Nicolas Usunier.
 Parseval Networks: Improving Robustness to Adversarial Examples. In ICML, 2017.
 \bibitem{dumoulin2018}
 Vincent Dumoulin, Francesco Visin. A guide to convolution arithmetic for deep learning. ArXiv, 2018.
 \bibitem{golub2012}
G.-H. Golub and  C.-F. Van Loan, Matrix computations, Johns Hopkins University Press, Baltimore, 2012.
\bibitem{goodfellow2013}
 I. J. Goodfellow, J. Shlens, and C. Szegedy. Explaining and harnessing adversarial examples. In ICLR,
2015.

\bibitem{Helfrich2017}
K. Helfrich, D. Willmott, and Q. Ye.
\newblock {Orthogonal Recurrent Neural Networks with Scaled Cayley Transform}, In ICML, 2018.


\bibitem{hochreiter2001}
S. Hochreiter, Y. Bengio, P. Frasconi, J. Schmidhuber, et al. Gradient flow in recurrent nets: the difficulty of
learning long-term dependencies, In Field Guide to Dynamical Recurrent Networks, IEEE Press, 2001.
\bibitem{jin2002}
 X. Jin, Developments and Applications of Block Toeplitz Iterative Solvers, Science Press, Beijing, 2002.


\bibitem{kova2008}
Kova$\breve{c}$evi$\acute{c}$, Jelena and Chebira, Amina. An introduction to frames, Now Publishers Inc, Boston,
2008.

\bibitem{arpack}
\newblock R. B. Lehoucq, D. C. Sorensen, and C. Yang,
\newblock ARPACK Users' Guides, Solution of Large Scale Eigenvalue Problems with Implicitly Restarted Arnoldi Method,
\newblock SIAM, Philadelphia, 1998.

\bibitem{LiangYe:svdifp}
\newblock Q. Liang and Q. Ye,
\newblock Computing singular values of large matrices with an inverse-free preconditioned krylov subspace method,
\newblock Electronic Transactions on Numerical Analysis, 42: 197--221, 2014.

\bibitem{Madu19}
\newblock K.D. Maduranga, K. Helfrich and Q. Ye,
\newblock Complex Unitary Recurrent Neural Networks using Scaled Cayley Transform,
In AAAI, 2019.

\bibitem{miyato2018}
Takeru Miyato, Toshiki Kataoka, Masanori Koyama, Yuichi Yoshida. Spectral Normalization for Generative Adversarial Networks. In ICLR, 2018.
\bibitem{sedghi2018}
Hanie Sedghi, Vineet Gupta and Philip M. Long. The Singular Values of Convolutional Layers. In ICLR, 2019.
\bibitem{stewart}
G. W. Stewart. Matrix Algorithms: Volume II. Eigensystems, SIAM, 2001.
\bibitem{szegedy2014}
C. Szegedy, W. Zaremba, I. Sutskever, J. Bruna, D. Erhan, I. J. Goodfellow, and R. Fergus. Intriguing
properties of neural networks. In ICLR, 2014.
\bibitem{tsuzuku2018}
Y. Tsuzuku, I. Sato,  and M. Sugiyama.
Lipschitz-Margin Training: Scalable Certification of Perturbation Invariance for Deep Neural Networks.
In NIPS, 2018.

\bibitem{Wisdom16}
S. Wisdom,  T. Powers, J. Hershey, J. Le~Roux, and L. Atlas. 
\newblock Full-capacity unitary recurrent neural networks.
\newblock In NIPS, 2016.

\bibitem{zhang}
C. Zhang, S. Bengio, M. Hardt, B. Recht, and O. Vinyals. Understanding deep learning requires rethinking generalization. In ICLR, 2017.
\bibitem{yoshida}
Yuichi Yoshida, Takeru Miyato. Spectral Norm Regularization for Improving the Generalizability of Deep Learning, ArXiv 2017.




\end{thebibliography}
\end{document}